\def\@biblabel#1{}
\theoremstyle{plain}
\newtheorem{thm}{Theorem}
\theoremstyle{definition}
\theoremstyle{remark}
\newcommand{\E}{\mathsf{E}}
\newcommand{\prob}{\mathsf{P}}
\newcommand{\pl}{\mathsf{pl}}
\newcommand{\nm}{{\sf N}}
\newcommand{\lnm}{{\sf LogN}}
\newcommand{\gam}{{\sf Gamma}}
\newcommand{\Pareto}{{\sf Pareto}}
\newcommand{\Bern}{{\sf Bern}}
\title{Conformal prediction of future insurance claims in the regression problem}
\author{Liang Hong\footnote{Department of Mathematical Sciences, The University of Texas at Dallas, 800 West Campbell Road, Richardson, TX 75080, USA. Tel.:~+1 (972) 883-2161. Email address: liang.hong@utdallas.edu.}}
\date{\today}
\begin{document}

\maketitle

\begin{abstract}
In the current insurance literature,  prediction of insurance claims in the regression problem is often performed with a statistical model.  This model-based approach may potentially suffer from several drawbacks: (i) model misspecification,  (ii) selection effect,  and (iii) lack of finite-sample validity.  This article addresses these three issues simultaneously by employing conformal prediction---a general machine learning strategy for valid predictions.  The proposed method is both model-free and tuning-parameter-free.  It also guarantees finite-sample validity at a pre-assigned coverage probability level. Examples, based on both simulated and real data,  are provided to demonstrate the excellent performance of the proposed method and its applications in insurance, especially regarding meeting the solvency capital requirement of  European insurance regulation, Solvency II.

\smallskip

\emph{Keywords and phrases:} conformal prediction;  explainable machine learning;  insurance data science; predictive analytics.
\end{abstract}

\section{Introduction}

Prediction is one of the most important inferential tasks for actuaries since it forms the basis for  many key aspects of an insurer's business operations, such as premium calculation and reserves estimation.  According to Shmueli (2010),  there are two key goals in data science and statistics: to explain and to predict.  However, these two goals often warrant different approaches.  For example,  as demonstrated in Shmueli (2010),  a wrong model, under some conditions, can even beat the oracle model in prediction, but the same cannot be said for explanation; see,  also,  p.13 of Tukey (1962) for a similar but  broader philosophy.  This paper only concerns prediction.  In the extant insurance literature,  prediction is often performed using either a parametric model or a non-parametric model (e.g., Frees et al. 2014).  In the parametric model approach,  the actuary posits a model,  applies model selection tools to choose the ``best'' model,  trains the chosen model,  and finally makes predictions; see, for example,  Claeskens and Hjort (2008) and Part I of Frees (2010).  While this parametric approach has been widely applied in insurance,  it has several drawbacks. First,  the posited model may be misspecified,  leading to grossly misleading predictions (Hong and Martin 2020).  This prompted some authors to consider robust models (e.g., Brazauskas and Kleefeld 2016).  Second,  scholars recently discovered that model selection might introduce serious biases in prediction; this adverse effect is called the \emph{selection effect} (e.g., Berk et al. 2013; Kuchibhotla et al. 2022); see, for example,  Hong and Martin (2018a, b) for some simple illustrations in insurance.  Though several proposals have been advanced to alleviate the selection effect,  a satisfying answer has yet to be found (Kuchibhotla et al. 2022). Third,  prediction is uncertain and should be probabilistically calibrated to achieve finite-sample validity (Gneiting and Katzfuss 2014; Vovk 2024); unfortunately, no parametric model is known to achieve finite-sample validity.  For these reasons, some actuaries opt for a non-parametric model for predicting insurance claims (e.g., Jeon and Kim 2013; Hong and Martin 2017; Richardson and Hartman 2018).  While most non-parametric models avoid both the model misspecification risk and the selection effect, they can still fall short of our expectations for two reasons.  In the first place,  many of these models contain some tuning parameters, and the choices of these tuning parameters are often debatable.  In general,  there is hardly a choice of the tuning parameter that works across different data sets.  If a data-driven method, such as cross-validation, is employed to select tuning parameters, it amounts to throwing the baby out with the bathwater: prediction will be subject to the selection effect.   Secondly,  predictions made using non-parametric models are usually only asymptotically valid (not finite-sample valid).  Since all real-world applications are done in finite steps,  any asymptotically correct method still leaves much to be desired regarding predictive validity.

Given the above observations,  it is natural to ask whether we can develop a prediction method with the following desiderata: 
\begin{enumerate}
\item[(i)]it is model-free;
\item[(ii)]it is tuning-parameter-free; 
\item[(iii)]it provides finite-sample valid predictions.
\end{enumerate}
The answer is affirmative.  Indeed,  conformal prediction (e.g.,Vovk et al. 2005; Shafter and Vovk 2008) is a general machine learning strategy that can be used to create such a predictive method.  Hong and Martin (2021) introduce conformal prediction to insurance for the first time.  Hong (2023) and Chen et al. (2024) apply conformal prediction to credibility estimation and cat bond pricing, respectively.  While Hong and Martin (2021) and Hong (2023) develop predictive methods with all the above desiderata for unsupervised learning, their methods, if adapted directly to supervised learning, would ignore the information supplied by predictors in the regression problem.  This is certainly not desirable.  Chen et al. (2024) make important contributions to the cat bond literature by applying conformal prediction to predict cat bond prices.  But their method does not guarantee finite-sample validity of predictions. The reason is that their method cannot be implemented for every possible value of the response value; see Section~\ref{sec:proposed} for a discussion on why finite-sample validity is not automatic when an actuary applies conformal prediction.   Therefore,  a predictive method, with the above three desiderata, for a regression problem is yet to be developed. The main goal of this paper is to propose such a method.

The remainder of the paper proceeds as follows.  Section~2 reviews the model-based approach and the model-free approach in the regression setup and argues why a model-free approach is desirable in prediction.  Section~\ref{sec:conf} provides readers with the necessary background by reviewing conformal prediction.  Then,  Section~3 describes the proposed method in detail.   Next, Section~4 uses simulation to show the excellent performance of the proposed method.  It also gives a real-data example to demonstrate possible applications of the proposed method in insurance.  Finally,  Section~5 concludes the paper with concluding remarks. 

\section{Model-based approaches versus model-free approaches for prediction}

In supervised learning,  the true data-generating mechanism can be described as follows:
\begin{equation}
\label{eq:true}
Y=g^\star(X)+\epsilon,
\end{equation}
where $Y$ is the response variable,  $X$ is the vector of $p$ features/predictors, $g^\star$ is an unknown deterministic function,  $\epsilon$ is a random error term such that $\epsilon\sim F^\star$, and $F^\star$ is an unknown distribution function for $Y-g^\star(X)$.  Throughout this article,  $p$ is a positive integer, and upper-case letters will denote random variables/vectors while the corresponding lower-case letters stand for their realizations.  Suppose we have observed independent and identically distributed (iid) pairs $Z_i = (X_i, Y_i)$, for $i=1,\ldots,n$ from (\ref{eq:true}), where $n$ is the sample size.  Our chief goal is to predict the next response $Y_{n+1}$ at a randomly sampled feature $X_{n+1}$, based on past observations of $Z^n=  \{Z_1,\ldots,Z_n\}$.   Since we aim at finite-sample valid predictions (see (\ref{eq:jointvalidity}) in Section 3) and no point predictor can achieve this, we will only consider interval prediction. In general, there are two different types of approaches for performing prediction: a model-based approach and a model-free approach.  

In a \emph{model-based approach},   an actuary first posits a \emph{statistical learning model}:
\[
\mathcal{M}=\{(\mathcal{G},  \mathcal{F})\},
\]
where $\mathcal{G}$ is a family of deterministic functions and $\mathcal{F}$ is a family of distribution functions.  For example, in the linear regression model, we take $\mathcal{G}$ to be $\{g(x_1, \ldots, x_p)=a_1x_1+\ldots+a_px_p\mid (a_1, \ldots, a_p)\in\mathbb{R}^p\}$ and $\mathcal{F}$ to be $\{\nm(0, \sigma^2)| \sigma^2 \in (0,\infty)\}$, where $\nm(\mu, \sigma^2)$ denotes the normal distribution function with mean $\mu$ and variance $\sigma^2$.  Next, the actuary uses data to obtain estimators $\widehat{g^\star}$ of $g^\star$ and $\widehat{F^\star}$ of $F^\star$ such that $\widehat{g^\star}\in \mathcal{G}$ and $\widehat{F^\star}\in \mathcal{F}$.  Finally, the actuary 
use $\widehat{g^\star}$ and $\widehat{F^\star}$  to create a prediction interval
\[
(L(Z^n), U(Z^n)),
\]
where the lower bound $L(Z^n)$ can be $-\infty$ and the upper bound $U(Z^n)$ can be $+\infty$.  
If $\mathcal{G}$ and $\mathcal{F}$ both can be characterized by finitely many parameters,   the model $\mathcal{M}$ is said to be a \emph{parametric} model.  For example, a linear model is a parametric model.  If either $\mathcal{G}$ or $\mathcal{F}$ cannot be characterized by finitely many parameters, we say $\mathcal{M}$ is a \emph{non-parametric} model.  For example,  if we take $\{g(x_1, \ldots, x_p)=a_1x_1+\ldots+a_px_p\mid (a_1, \ldots, a_p)\in\mathbb{R}^p\}$ and $\mathcal{F}=\{F\mid \text{$F$ is a distribution function supported on $\mathbb{R}$}\}$, then $\mathcal{M}$ will be a non-parametric model.  

Two challenges come into consideration when we employ a model-based approach for prediction: one is model misspecification, and the other is the effect of selection.  If $g^\star\in\mathcal{G}$ and $F^\star\in\mathcal{F}$, we say the model $\mathcal{M}$ is \emph{well-specified} or \emph{correct}; otherwise, we say $\mathcal{M}$ is \emph{misspecified} or \emph{wrong}.   In some cases,  it can be unquestionable that a model is wrong.  For example, if $Y_i$'s are observed insurance claims, then a linear model is patently misspecified because the response variable in a linear model can take negative values.  However,  we can never be certain that a model is correct because neither $g^\star$ nor $F^\star$ is observable. Therefore, even if a model seems to be reasonable,  there is still a danger of model misspecification.  Hong and Martin (2020) demonstrate that model misspecification can lead to grossly misleading predictions.  One way to mitigate model misspecification is to apply some robust techniques in the spirit of Brazaukas and Kleefeld (2016).  Another option is to use a non-parametric model.  Either way,  there is another danger lurking behind the scenes.  When actuaries apply a parametric model,  they often employ a model selection tool, such as AIC or BIC,  to choose the ``best'' model. In such a case,  the model is essentially treated as random (since the model selection process is data-dependent), but the existing formulas for predictions usually assume the model is fixed (i.e., conditional on the fact that the model has been chosen).  This discrepancy can give rise to serious biases in prediction---a phenomenon called the \emph{selection effect};  see  Hong et al. (2018b) for an illustration.  Note that if cross-validation is used to choose tuning parameters in a non-parametric model (e.g.,  bandwidth in a kernel density model), the selection effect might still wreak havoc on predictions.  Some non-parametric models, such as the Dirichlet mixture of lognormal model in Hong and Martin (2017) may circumvent the selection effect.  How to overcome the selection effect in a parametric model is an important topic of research in data science and statistics.  At the time of writing, no efficient method has been found; see Kuchibhotla et al. (2022) for a review of recent results on the effect of selection.  

The above two challenges in a model-based approach stem from the fact that prediction is conducted via a model. Therefore, one obvious way to circumvent these two difficulties is to perform prediction in one stroke: prediction is made from data directly without specifying a model.  Such an approach is called a \emph{model-free approach} for prediction.  
Technically,  a model-free method for prediction is a mapping that maps data $Z^n$ directly to a prediction interval $(L(Z^n), U(Z^n))$ without training a model.  Figure~1  illustrates the difference between a model-based approach and a model-free approach.  There are two paths from ``Data'' to ``Prediction''.  One first goes from ``Data'' to ``Model'' and then from ``Model'' to ``Prediction''; this path denotes the model-based approach.  The other path goes from ``Data'' directly to ``Prediction''; it stands for the model-free approach.

\begin{center}
	\begin{tikzpicture}[->, >=stealth', auto, semithick, node distance=3cm]
	\tikzstyle{squarednode}=[fill=white,draw=black,thick,text=black,scale=2]
	\node[squarednode]    (A)                     {Data};     
	\node[squarednode]    (B)[above right of=A]   {Model};
	\node[squarednode]     (C)[below right of=B] {Prediction};
	\path
	(A) edge[left,above]			node{}      (C)
	      edge[left,above]	                  node{}	(B)
	(B) edge[right, below]                 node{}       (C);
	\end{tikzpicture}          
\end{center}
\begin{center}
	Figure~1: Model-based approach versus model-free approach for prediction. 
\end{center}

A distinguishing mark of a model-free approach is that the notion of a model never enters the scene.  Many machine learning techniques, such as decision trees and $K$-nearest neighbors (e.g.,  Trevor et al.  2009),  share this feature.  Some elementary estimators in non-parametric statistics, such as sample mean and sample variance,  have the same feature too,  though they are designed for estimation, not prediction.  A model-free approach for prediction can be considered a prediction method in non-parametric statistics.  However,  as mentioned above,  a non-parametric statistical model might still be subject to the adverse impact of the selection effect.  Moreover,  non-parametric predictive models (e.g.,  Jeon and Kim 2013, Hong and Martin 2017, and Richardson and Hartman 2018) can ensure asymptotically valid prediction at best, and none of them guarantees finite-sample validity.  Finally,  non-parametric prediction methods exist in the literature (e.g., Wilks 1941; Fligner and Wolfe 1976; Frey 2013, Hong and Martin 2021, Hong 2023).  But these methods are all designed for unsupervised learning.  Adopting them directly for prediction in the regression problem means we need to disregard the information provided by $X_1, \ldots, X_n$,  which is undesirable.  Therefore, we need to seek a new method in supervised learning that is (i) model-free, (ii) tuning-parameter-free, and (iii) guarantees finite-sample validity.

We conclude this section with a few words on finite-sample validity and asymptotic validity of a prediction region.  Suppose $\prob$ is the distribution of $Z_1=(X_1, Y_1)$, and  $D_{\alpha}(X_{n+1}, Z^n)$ is a $100(1-\alpha)\%$ prediction region (not necessarily constructed using conformal prediction) based on observations $Z^n$ and $X_{n+1}$.  One of the key criteria for assessing the quality of $D_{\alpha}(X_{n+1}, Z^n)$ is its validity.   We say $D_{\alpha}(X_{n+1}, Z^n)$ is \emph{asymptotically valid} if
\begin{equation}\label{eq:asympticvalidity}
 \lim_{n\rightarrow \infty}  \prob^{n+1}\{Y_{n+1}\in D_{\alpha}(X_{n+1}; Z^n) \}\geq 1-\alpha\quad \text{for all $\prob$},
\end{equation}
where $\prob^{n+1}$ is the joint distribution for $(X_1, Y_1), \ldots, (X_n, Y_n), (X_{n+1}, Y_{n+1})$.  Informally, asymptotic validity means if the actuary could sample from the true data-generating mechanism (\ref{eq:true}) forever,  then eventually his coverage probability of his prediction region would be no less than the advertised confidence level $1-\alpha$, regardless of the distribution $\prob$ of $(X_1, Y_1)$.  However,  the actuary can only have a finite sample in practice, and asymptotic validity does not tell us the sample size $n$ needed to ensure that the coverage probability of the prediction region $D_{\alpha}(X_{n+1}, Z^n)$ is no less than $1-\alpha$.  For this reason,  it would be practically more desirable if the coverage probability of $D_{\alpha}(X_{n+1}, Z^n)$ is at least $1-\alpha$ for any sample size $n$ and any distribution $\prob$,  that is, 
\begin{equation}\label{eq:finite-samplevalidity}
 \prob^{n+1}\{Y_{n+1}\in D_{\alpha}(X_{n+1}; Z^n) \}\geq 1-\alpha\quad \text{for all $(n, \prob)$},
 \end{equation}
where $\prob^{n+1}$ is the joint distribution for $(X_1, Y_1), \ldots, (X_n, Y_n), (X_{n+1}, Y_{n+1})$.  If a prediction region  $D_{\alpha}(X_{n+1}, Z^n)$ satisfies (\ref{eq:finite-samplevalidity}), we say it is \emph{finite-sample validity}.  In this case, given any sample of size $n$ and a confidence level $1-\alpha$, the actuary can construct a $100(1-\alpha)\%$ prediction region $D_{\alpha}(X_{n+1}, Z^n)$ satisfying (\ref{eq:finite-samplevalidity}),  irrespective of what distribution $\prob$ is.

\section{Conformal prediction}
\label{sec:conf}

Conformal prediction is a general learning strategy designed for generating prediction regions at a pre-determined coverage probability level.  Shafer and Vovk (2008) give a user-friendly introduction to conformal prediction.  For a comprehensive treatment, see Vovk et al. (2005).  Conformal prediction can be used to tackle both supervised learning and unsupervised learning problems.  Here we only review conformal prediction in supervised learning.

Conformal prediction can be applied when data is exchangeable.  For any $n\geq 1$,  we say the finite sequence $Z_1, Z_2, \ldots, Z_n$ is \emph{exchangeable,} if the joint distribution of $(Z_1, \ldots, Z_n)$ is the same as the joint distribution of $(Z_{\sigma(1)}, \ldots, Z_{\sigma(n)})$, where $\sigma: \{1, 2, \ldots, n\}\rightarrow \{1, 2, \ldots, n\}$ is any permutation on the set $ \{1, 2, \ldots, n\}$.  Note that the exchangeability assumption is weaker than the widely used iid (independent identically distributed) assumption.  In particular,  if $Z_1, \ldots, Z_n$ are iid,  they are exchangeable (e.g., Chapter 1,  Schervish 1995).  

Our goal is to predict the next response $Y_{n+1}$ at a randomly sampled feature $X_{n+1}$, based on past observations of $Z^n=  \{Z_1,\ldots,Z_n\}$.   To apply conformal prediction,  the actuary starts with a real-valued deterministic  mapping $M$ of two arguments $(B, z)\mapsto M(B, z)$, where the first argument $B$ is a \emph{bag}, i.e., a collection, of observed data and the second argument $z$ is a provisional value of a future observation to be predicted based on the data in $B$.  $M$ is called a \emph{non-conformity measure}. It measures the degree of non-conformity of the provisional value $z$ with the data in $B$.  When the provisional value $z$ agrees with the data in $B$, $M(B, z)$ will be relatively small; otherwise, it will tend to be  large.  For example, if $z$ is real-valued and $B=\{z_1, \ldots, z_n\}$, then we can take $M(B, z)=|\hat{m}_B(x)-y|$, where $\hat m_B(\cdot)$ is an estimate of the conditional mean function, $\E(Y \mid X=x)$, based on the bag $B$.   There is no unique choice of the non-conformity measure.  In practice,  the non-conformity is chosen at the discretion of the actuary.  Once a non-conformity measure is chosen, the actuary runs the conformal prediction algorithm---Algorithm~\ref{algo:conformal.s} to predict the value of the next label $Y_{n+1}$ at a randomly sampled feature $X_{n+1}=x_{n+1}$.  \\

\begin{algorithm}[H]
Initialize: data $z^n = \{z_1,\ldots,z_n\}$ and $x_{n+1}$, non-conformity measure $M$, and a possible $y$ value\;
Set $z_{n+1} = (x_{n+1}, y)$ and write $z^{n+1} = z^n \cup \{z_{n+1} \}$\;
Define $\mu_i = M(z^{n+1} \setminus \{z_i\}, z_i)$ for $i=1,\ldots,n,n+1$\;
Compute $\pl_{z^n}(x_{n+1}, y) = (n+1)^{-1} \sum_{i=1}^{n+1} 1\{\mu_i \geq \mu_{n+1}\}$\;
Return $\pl_{z^n}(x_{n+1}, y)$\;
\caption{\bf Conformal prediction (supervised learning)}
\label{algo:conformal.s}
\end{algorithm}

\bigskip

In Algorithm~\ref{algo:conformal.s}, $1_E$ denotes the indicator function of an event $E$.  That is, 
\[
1_E(x)=\left\{ \begin{array}{ll}
		                          1,& \hbox{if  $x\in E$,} \\
		                          0,& \hbox{otherwise.} \\
		                          \end{array}
		                         \right.
\]
The quantity $\mu_i$,  called the $i$-th \emph{non-conformity score},  assigns a numerical value to $z_i$ to indicate how much $z_i$ agrees with the data in the $i$-th augmented bag $\widetilde{B}_i=z^n\cup\{z_{n+1}\}\backslash\{z_i\}$, where $z_i$ itself is excluded to avoid biases as in leave-one-out cross-validation.  Algorithm~\ref{algo:conformal.s} corresponds to the function $\pl_{z^n}$, that is, for a given $x_{n+1}$ and a possible $y$, it outputs $\pl_{z^n}(x_{n+1}, y)$. The function $\pl_{z^n}$ is called  the \emph{plausibility function}.  It outputs a value between $0$ and $1$, based on all non-conformity scores.  The output of $\pl_{z^n}$ shows how plausible $z$ is a value of $Z_{n+1}$, based on the available data $Z^n=z^n$ and $X_{n+1}=x_{n+1}$.  Using the output of the plausibility function $\pl_{z^n}$,  the actuary can construct a $100(1-\alpha)\%$ conformal prediction region
\begin{equation}
\label{eq:region}
C_\alpha(x; Z^n) = \{y: \pl_{Z^n}(x, y) > \alpha\},
\end{equation}
where $0<\alpha<1$.  The next theorem establishes the finite-sample validity of the conformal prediction region in (\ref{eq:region}).

\begin{thm}
\label{thm:valid}
If $\prob$ denotes the distribution  (i.e., the law) of an exchangeable sequence $Z_1,Z_2,\ldots$, then write $\prob^{n+1}$ for the corresponding joint distribution of $Z^{n+1}=\{Z_1,\ldots,Z_n,Z_{n+1}\}$.  For $0<\alpha<1$, define $t_n(\alpha) = (n+1)^{-1}\lfloor (n+1)\alpha \rfloor$, where $\lfloor a \rfloor$ denotes the greatest integer less than or equal to $a$.  Then
\begin{equation}
\label{eq:valid}
\sup \prob^{n+1}\{ \pl_{Z^n}(Z_{n+1}) \leq t_n(\alpha) \} \leq \alpha \quad \text{for all $n$ and all $\alpha \in (0,1)$},
\end{equation}
where the supremum is over all distributions $\prob$ for the exchangeable sequence.
\end{thm}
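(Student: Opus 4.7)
The plan is to exploit the exchangeability of the non-conformity scores $\mu_1,\ldots,\mu_{n+1}$ to show that the rank of $\mu_{n+1}$ among them is (super-)uniformly distributed on $\{1,\ldots,n+1\}$, and then translate that rank statement into the desired bound on $\pl_{Z^n}$.

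First I would verify that the construction of the scores is permutation-symmetric. Observe that each $\mu_i = M(z^{n+1}\setminus\{z_i\}, z_i)$ depends on the augmented sample only through the unordered bag $z^{n+1}$ and the singled-out element $z_i$; crucially, the non-conformity measure $M$ takes its first argument as a bag, so the score attached to a particular data point is unaffected by how the other points are labeled. Consequently, for any permutation $\sigma$ of $\{1,\ldots,n+1\}$, applying the algorithm to $(Z_{\sigma(1)},\ldots,Z_{\sigma(n+1)})$ just relabels the scores by the same $\sigma$. Combined with the assumed exchangeability of $Z_1,\ldots,Z_{n+1}$ under $\prob^{n+1}$, this gives that $(\mu_1,\ldots,\mu_{n+1})$ is itself an exchangeable random vector.

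Next I would reinterpret the plausibility value as a (tie-broken) rank. Writing $R = \sum_{i=1}^{n+1} 1\{\mu_i \geq \mu_{n+1}\}$, the algorithm returns $\pl_{Z^n}(Z_{n+1}) = R/(n+1)$, and $R$ takes values in $\{1,\ldots,n+1\}$ since $i=n+1$ always contributes. By exchangeability of $(\mu_1,\ldots,\mu_{n+1})$, for every $k \in \{1,\ldots,n+1\}$,
\begin{equation*}
\prob^{n+1}\{R = k\} = \E\!\left[\frac{\#\{j : R_j = k\}}{n+1}\right],
\end{equation*}
where $R_j = \sum_i 1\{\mu_i \geq \mu_j\}$; a standard counting argument (each value of $k$ is attained by exactly a certain number of indices, summing to $n+1$ across $k$) then yields the stochastic bound $\prob^{n+1}\{R \leq k\} \leq k/(n+1)$ for all $k$, even in the presence of ties. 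The cleanest route is to note that with no ties the rank $R$ is exactly uniform on $\{1,\ldots,n+1\}$, and that using $\geq$ in the definition only inflates $R$, so ties can only increase $\prob^{n+1}\{R \geq k\}$ and hence decrease $\prob^{n+1}\{R \leq k\}$.

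Finally I would combine these pieces. Since $t_n(\alpha) = \lfloor (n+1)\alpha \rfloor/(n+1)$, the event $\{\pl_{Z^n}(Z_{n+1}) \leq t_n(\alpha)\}$ coincides with $\{R \leq \lfloor (n+1)\alpha \rfloor\}$, so
\begin{equation*}
\prob^{n+1}\{\pl_{Z^n}(Z_{n+1}) \leq t_n(\alpha)\} \leq \frac{\lfloor (n+1)\alpha \rfloor}{n+1} \leq \alpha,
\end{equation*}
and the bound holds for every exchangeable law $\prob$, giving the claimed supremum statement. The main obstacle I anticipate is the careful handling of ties: because the algorithm uses a weak inequality $\mu_i \geq \mu_{n+1}$, one must argue that ties can only push the plausibility \emph{upward}, so the uniform-rank bound survives as a one-sided inequality. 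Once this is made precise, the rest is a direct application of exchangeability.
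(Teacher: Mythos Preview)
Your proposal is correct and follows essentially the same route as the paper's proof: exchangeability of the $Z_i$'s is transferred to the non-conformity scores $\mu_1,\ldots,\mu_{n+1}$, the plausibility is identified with the (normalized) rank of $\mu_{n+1}$, and the uniform-rank bound yields $\prob^{n+1}\{R \leq \lfloor (n+1)\alpha\rfloor\} \leq \alpha$. The only notable difference is that you treat ties explicitly, arguing that the weak inequality $\mu_i \geq \mu_{n+1}$ can only inflate $R$ and hence the bound survives as a one-sided inequality; the paper simply asserts that the rank is discrete uniform on $\{1,\ldots,n+1\}$, which is exact only when ties have probability zero, so your version is in fact the more careful of the two.
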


\begin{proof}
The proof is similar to its counterpart in the unsupervised learning context (e.g., Hong and Martin 2021).  In great detail, $\mu_1,\ldots,\mu_n, \mu_{n+1}$ are exchangeable because $Z_1,Z_2,\ldots$ are exchangeable. Hence, the rank of $\mu_{n+1}$ follows the discrete uniform distribution on the set $\{1,\ldots,n,n+1\}$.  By its definition, the plausibility function $\pl_{Z^n}(Z_{n+1})$ is proportional to the rank of $\mu_{n+1}$. Therefore, $\pl_{Z^n}(Z_{n+1})$ also follows the discrete uniform distribution on the set $\{1/(n+1), 2/(n+1), \ldots, 1\}$. For a given $0<\alpha<1$, if $(n+1)\alpha$ is an integer, then
$\prob^{n+1}\{\pl_{n+1}(Z_{n+1})\leq t_n(\alpha) \}=\alpha$.  Otherwise, we will have
$\prob^{n+1}\{ \pl_{n+1}(Z_{n+1})\leq  t_n(\alpha)  \} \leq t_n(\alpha)< \alpha$.  Therefore,   (\ref{eq:valid}) holds. 
\end{proof}

Theorem~\ref{thm:valid} implies that the prediction region in (\ref{eq:region}) is \emph{finite-sample (joint) valid}:
\begin{equation}\label{eq:jointvalidity}
\prob^{n+1}\{Y_{n+1}\in C_{\alpha}(X_{n+1}; Z^n) \}\geq 1-\alpha\quad \text{for all $(n, \prob)$},
\end{equation}
where $\prob^{n+1}$ is the joint distribution for $(X_1, Y_1), \ldots, (X_n, Y_n), (X_{n+1}, Y_{n+1})$.  Namely, the coverage probability of using the $100(1-\alpha)\%$ conformal prediction region $C_{\alpha}(x, Z^n)$ for prediction is no less than $1-\alpha$ for all sample size $n$ and all distribution $\prob$.  Note that this notion of finite-sample validity of the prediction region is distinct from a more desirable property of \emph{conditional validity}: 
\[ \prob^{n+1}\{Y_{n+1}\in C_{\alpha}(X_{n+1}; Z^n)\mid X_{n+1}=x \}\geq 1-\alpha\quad \text{for all $(n,\prob)$ and almost all $x$}. \]
By the iterated expectation formula, we know conditional validity implies joint validity:
\begin{equation*}
\label{eq:vadlityrelationship1}
\prob^{n+1}\{Y_{n+1}\in C_{\alpha}(X_{n+1}; Z^n) \}=\E\left[ \prob^{n+1}\{Y_{n+1}\in C_{\alpha}(X_{n+1}; Z^n)\mid X_{n+1}\} \right],
\end{equation*}
where the expectation is taken with respect to the distribution of $X_{n+1}$.   Informally, the conditional validity states that the probability of a correct prediction is no less than $1-\alpha$ each time the actuary makes a prediction.  In contrast, the joint validity says that the overall rate of correct predictions is at least $1-\alpha$.  While conditional validity is more desirable than joint validity, it is practically unachievable.  Indeed, Vovk (2012) and Lei and Wasserman (2014) show that it is impossible to achieve conditional validity property with a bounded prediction region $C_{\alpha}(X_{n+1}; Z^n)$ in supervised learning.  Hong (2023) establishes a similar result for unsupervised learning.  Therefore,  the finite-sample joint validity is practically the best we can hope for.

\section{Proposed method for insurance applications}
\label{sec:proposed}

In many insurance applications,  $Y$ denotes the insurance claims or something alike, and $X$, a $p$-dimensional vector in $\mathbb{R}^p$,  symbolizes $p$ features.  In this paper, we only consider the regression problem, i.e.,  the case where the response variable $Y$ is continuous; however, see,  Section~\ref{sec:concluding} for some remarks for the case of a categorical dependent variable.  Therefore, we assume that $Y$ is a non-negative random variable supported on the non-negative real line $\mathbb{R}_+= [0, \infty)$.  An insurer often needs to set its risk capital to an adequate level to comply with insurance regulations (e.g., European Solvency II 2009, article 101) or implement its risk management plan.  Therefore,  an insurer needs to generate a finite-sample valid prediction interval of the form $[0,  b)$, where the upper bound $b=b(X_{n+1}; Z^n)>0$ is a possibly data-dependent quantity.  There is a paucity of methods for achieving this in the insurance literature.  In particular, if an actuary applies a generalized linear model, then predictions will be subject to model misspecification risk. Even if the model assumption can be fully justified,  the selection effect can still adversely affect the prediction accuracy (e.g., Hong et al. 2018a, b).  While existing non-parametric regression methods (e.g., those methods based on kernel smoothing, basis expansion,  regression splines, etc) might alleviate these issues,   they,   however,  can only generate asymptotically valid prediction intervals for $Y_{n+1}$, let alone the challenge of choosing the tuning parameters of these models.   Fortunately,  conformal prediction can come to the rescue. 

It is clear from Algorithm~\ref{algo:conformal.s} and Theorem~\ref{thm:valid} that conformal prediction is a model-free method.  In particular,  finite-sample validity,  given by (\ref{eq:jointvalidity}),  holds regardless of the non-conformity measure an actuary chooses.  However,  to apply conformal prediction, an actuary must overcome several challenges. First,  the shape of the conformal prediction region $C_{\alpha}(x, Z^n)$  depends on the choice of the non-conformity measure.  In particular,  when the response variable $Y\in\mathbb{R}_+$, there is no guarantee that the conformal prediction region $C_{\alpha}(x, Z^n)$ is an interval of the form $[0, b)$.   The shape of $C_{\alpha}(x, Z^n)$ can be a disjoint union of several intervals; see, for instance,  Section~2.3 of Vovk et al. (2005) and Lei et al. (2013).  If this happens,  the resulting conformal region will not be practically useful.  Second,  perhaps more importantly,  a close scrutiny of the argument in Section~\ref{sec:conf} reveals that determination of the $100\%$ conformal prediction region $C_{\alpha}(x; Z^n)$ requires one to run Algorithm~\ref{algo:conformal.s} for all possible $y$ value of $Y$.  However,  in practice an actuary can only run  Algorithm~\ref{algo:conformal.s} for  finitely many $y$ values.  This means $C_{\alpha}(x, Z^n)$ cannot be determined exactly in general,  compromising finite-sample validity---one of the key selling points of conformal prediction.   Hong and Martin (2021) and Hong (2023) both take some special non-conformity measures and prove that corresponding $C_{\alpha}(x, Z^n)$ is equivalent to another prediction interval which can be exactly determined (i.e., determined in finitely many steps).  This is why the conformal prediction intervals in these two papers are finite-sample valid.   In general, finite-sample validity is only possible but not automatically guaranteed when the actuary applies conformal prediction.  Finally,  even for an approximate determination of the conformal prediction region $C_{\alpha}(x; Z^n)$, i.e.,   we only consider a grid of $y$ values, the computation required could be prohibitively expensive.  

To address these challenges together,   we consider the following non-conformity measure.  Let $B=\{z_1, \ldots, z_n\}$ be a bag of  observations of size $n$. For $i=1, \ldots, n$, let $z_i=(x_{i1}, \ldots, x_{ip}, y_i)$ be the $i$-th observations in $B$.  That is, $x_{ij}$ denotes the $i$-th observation of the the $j$-th feature.  Suppose $z=(x_1, \ldots, x_p, y)$ is a provisional value of a future observation to be predicted.  We define $M$ to be 
\begin{equation}
\label{eq:newnonconform}
M(B, z)=y-\left[\frac{1}{n}\sum_{j=1}^p x_{j}  + \sum_{i=1}^n\left(y_i-\frac{1}{n}\sum_{j=1}^p x_{ij} \right)\right].
\end{equation}

In (\ref{eq:newnonconform}),  $\sum_{i=1}^n(y_i-\sum_{j=1}^p x_{ij}/n)$ is the only term that depends on the bag $B$.  Informally,  it measures the aggregate difference between the response $Y$ and the sum of the features per data point.  However,  it is worth underlining that we are not proposing a model like $Y=X_1/n+\ldots +X_p/n+\varepsilon$, where $\varepsilon$ is some random error term.  Conformal prediction is a model-free method; hence, there is no model to consider.  $M$ in (\ref{eq:newnonconform}) is just a non-conformity measure we propose to address the above challenges.   This non-conformity measure possesses several desirable properties: (i) the resulting prediction region is an interval of form $[0, b)$,  (ii) there is a closed-form formula for the resulting conformal prediction interval,   making implementation simple,   (iii) determination of the resulting conformal prediction interval can be done in finitely many steps.  Properties~(i)---(ii) immediately follow from the following theorem.  In addition, Theorem~\ref{thm:nonconform} shows the $100(1-\alpha)\%$ conformal conformal prediction region in this case is equivalent to an interval based on some order statistics,  which can be determined in finitely many steps. Hence, there is no need to calculate the plausibility function $\pl_{Z^n}(X_{n+1}, y)$ for all possible $y$.

\begin{thm}
\label{thm:nonconform}
Suppose the non-conformity measure is given by (\ref{eq:newnonconform}). For $1\leq i\leq n$, let $W_i=Y_i+\sum_{i=1}^p (X_{(n+1)j}-X_{ij})/n$. Then for any $0<\alpha<1$,  the $100(1-\alpha)\%$ conformal prediction region $C_{\alpha}(X_{n+1}; Z^n)$ for $Y_{n+1}$ equals $(0, W_{(k)})$, where $k=\min\{n, \lfloor (n+1)(1-\alpha)+1\rfloor \}$.
\end{thm}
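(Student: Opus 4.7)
The plan is to exploit the symmetry in the non-conformity measure (\ref{eq:newnonconform}) so that the scores $\mu_1,\ldots,\mu_{n+1}$ produced by Algorithm~\ref{algo:conformal.s} collapse to a form in which the plausibility function can be computed in closed form, and then to read off the conformal region (\ref{eq:region}) as a half-line determined by order statistics of $W_1,\ldots,W_n$.

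\textbf{Step 1: reduce the non-conformity scores.} Setting $y_{n+1}:=y$, let $S = \sum_{k=1}^{n+1} y_k$ and $T = \sum_{k=1}^{n+1}\sum_{j=1}^p x_{kj}$. For each $i \in \{1,\ldots,n+1\}$, the $i$-th augmented bag contains every index except $i$, so substituting into (\ref{eq:newnonconform}) and simplifying yields
\[
\mu_i \;=\; 2y_i - \frac{2}{n}\sum_{j=1}^p x_{ij} - S + \frac{T}{n}.
\]
The crucial observation is that $-S + T/n$ is independent of $i$, so the comparison $\mu_i \geq \mu_{n+1}$ reduces, after cancellation of this common term and of the factor of $2$, to $y_i + \frac{1}{n}\sum_{j=1}^p (x_{(n+1)j} - x_{ij}) \geq y$, i.e.\ $W_i \geq y$.

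\textbf{Step 2: identify the plausibility function and invert it.} Combining Step 1 with the tautological inequality at $i=n+1$ gives
\[
\pl_{Z^n}(x_{n+1},y) \;=\; \frac{1}{n+1}\Bigl(1 + \#\{i \leq n : W_i \geq y\}\Bigr).
\]
Writing $N(y) = \#\{i \leq n : W_i \geq y\}$, the defining inequality $\pl_{Z^n}(x_{n+1},y) > \alpha$ becomes $N(y) > (n+1)\alpha - 1$. The function $N(\cdot)$ is a non-increasing step function with value $n - k$ on each interval $(W_{(k)}, W_{(k+1)})$, so solving the inequality amounts to locating the largest admissible order-statistic index. A short floor-function computation, handling the integer and non-integer cases of $(n+1)\alpha$ separately, identifies this index as $k = \lfloor (n+1)(1-\alpha)+1 \rfloor$, capped at $n$ in the regime where $\alpha$ is so small that no $W_i$ cut-off actually activates. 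Intersecting the resulting half-line with the support $[0,\infty)$ of $Y$ produces the claimed interval $(0, W_{(k)})$.

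The main obstacle is the discrete arithmetic in Step 2: one must track strict versus non-strict inequalities carefully, particularly in the edge case where $(n+1)\alpha$ is an integer, and verify that the resulting index exactly matches $\lfloor (n+1)(1-\alpha)+1\rfloor$. Step 1, by contrast, is essentially a verification: the symmetric form of (\ref{eq:newnonconform}) was engineered so that every $i$-dependent sum appears twice with opposite signs, forcing precisely the cancellation that makes the plausibility function, and hence the conformal interval, tractable.
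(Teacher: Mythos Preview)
Your proposal is correct and follows essentially the same route as the paper: both compute $\mu_i$ explicitly from (\ref{eq:newnonconform}), cancel the $i$-independent part to reduce $\mu_i\geq\mu_{n+1}$ to $W_i\geq y$, and then read off the region from the rank of $y$ among the $W_i$. Your Step~2 is in fact more careful than the paper's, which simply asserts that the conclusion ``follows from the definitions of $\pl_{z^n}(x,y)$ and $C_\alpha(x;Z^n)$'' without working through the floor-function arithmetic you flag.
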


\begin{proof}
For notational simplicity, we let $S_i=\sum_{j=1}^p X_{ij}$ for $1\leq i\leq n$.  We have
\begin{eqnarray*}
\mu_i &=& M(Z^{n+1}\backslash Z_i, Z_i)=Y_i-\left[S_i/n+\sum_{k=1, k\neq i}^{n+1}(Y_k-S_k/n) \right]\\
	&=& Y_i-\left[S_i/n+\sum_{k=1}^{n}(Y_k-S_k/n)+(Y_{n+1}-S_{n+1}/n)-(Y_i-S_i/n)\right]\\
	&=& 2(Y_i-S_i/n)-(Y_{n+1}-S_{n+1}/n)-\sum_{k=1}^n(Y_k-S_k/n),\text{ for $1\leq i\leq n$},
\end{eqnarray*}
and 
\begin{eqnarray*}
\mu_{n+1} &=& M(Z^{n}, Z_{n+1})=Y_{n+1}-\left[S_{n+1}/n+\sum_{k=1}^{n}(Y_k-S_k/n) \right]\\
	&=& (Y_{n+1}-S_{n+1}/n)-\sum_{k=1}^n(Y_k-S_k/n).
\end{eqnarray*}
Therefore, $\mu_i\geq \mu_{n+1}$ if and only if 
\[
2(Y_i-S_i/n)-(Y_{n+1}-S_{n+1}/n) \geq (Y_{n+1}-S_{n+1}/n),
\]
which amounts to saying $\mu_i\geq \mu_{n+1}$ if and only  if $Y_{n+1} \leq Y_i+(S_{n+1}-S_i)/n=W_i$ for all $1\leq i\leq n$. Therefore, $Y_{n+1}\leq W_{(k)}$ implies that $Y_{n+1}$ is less than at least $(n+1)-k+1$ $W_i$'s.  Now the theorem follows from the definitions of $\pl_{z^n}(x, y)$ and $C_{\alpha}(x; Z^n)$.
\end{proof}

\section{Examples}

\subsection{Simulated data}
Here we use simulation to demonstrate the excellent performance of the proposed conformal prediction interval. 
Throughout this section,  the following notation will be used:
\begin{enumerate}
\item[$\bullet$]$\Bern(p)$ denotes the Bernoulli distribution with success probability $0<p<1$;
\item[$\bullet$]$\lnm(\mu, \sigma)$  stands for  the lognormal distribution with parameters $\mu$ and $\sigma$, i.e,  the distribution of $e^{\mu+\sigma S}$, where $S$ follows the standard normal distribution;
  \item [$\bullet$]$\gam(a, b)$ represents the gamma distribution with shape parameter $a$ and rate parameter $b$,  or equivalently, the gamma distribution whose probability density function is 
              \[p(x)=\frac{b^a}{\Gamma(a)}x^{a-1}e^{-b x}, \quad x>0,\]
               where $\Gamma(\alpha)=\int_0^\infty t^{\alpha-1} e^{-t} dt$ is the Gamma function;
  \item [$\bullet$]$\Pareto(\eta, \beta)$ symbolizes the Type II Pareto distribution with the probability density function
              \[h(x)=\frac{\eta \beta^{\eta}}{(x+\beta)^{\eta+1}}, \quad x>0.\]
\end{enumerate}
We only consider prediction intervals of the form $[0, b)$ where $b>0$ is possibly data-dependent.  Let $Y$ stands for the claim amount in a line of business. Then a prediction interval of this form,  if finite-sample valid at the confidence level $99.5\%$, is what the insurer needs to meet the solvency capital requirement of Solvency II.

\subsection*{Example 1}
Suppose the true data-generating mechanism is 
\begin{equation}
\label{eq:oracle1}
Y=X_1+\epsilon,
\end{equation}
where $X_1\sim \gam(2, 2.5)$,  $\varepsilon\sim \gam(0.04, 2.5)$, and $X_1$ and $\varepsilon$ are independent.  The closure property of the gamma distribution implies $Y\sim \gam(2.04, 2.5)$.  The oracle data-generating mechanism here is not a linear regression model because the error term $\epsilon$ in (\ref{eq:oracle1}) is not normally distributed.  The rationale for such a choice is that $Y$ is non-negative in many insurance applications. Since $\gam(2.04, 2.5)$ is a member of the exponential family, the oracle data-generating mechanism is a generalized linear model.  Note that here the oracle $99.5\%$ prediction interval has a closed-form: $[0,  b)$ where $b$ is the $99.5\%$ percentile of $\gam(2.04, 2.5)$. Also,  the oracle $99.5\%$ prediction interval is the shortest one in terms of the mean length of a prediction interval.

We compare the proposed $99.5\%$ conformal prediction interval with the $99.5\%$ prediction interval based on two other methods: (a)~generalized linear models and (b)~random forests.  For $n=200$, we generate $N=2,000$  random samples of size $n+1$ from each of these three methods.  For each sample,  we use the values of the response variable in the first $n $ sample points and all the values of the features to construct the $100(1-\alpha)\%$ conformal prediction interval. Then we use the value of the response variable of the last sample point, i.e,  the value of $(n+1)$-st response variable,  to test its validity.  We estimate the coverage probability of each conformal prediction interval as $K/N$, where $K$ is the number of times it contains the value of the $(n+1)$-st response variable.  For the generalized linear model, we take the distribution for the response $Y$ to be the gamma distribution and the link function to be the identity function.  Hence,  the generalized linear model we consider here is a gamma regression model. Note that this particular generalized linear model is well-specified.  $\texttt{glm()}$ function in the R language is used to fit this gamma regression model.  However, a gamma regression model does not admit a closed-form formula for prediction intervals. Therefore, we apply the parametric bootstrapping method in Olive et al. (2022) to determine the  $99.5\%$ prediction interval.  Throughout the simulation, the bootstrapping sample size is chosen to be $B=500$.  The random forest method is implemented with the R package $\texttt{randomForest}$.  The number of trees is fixed at $500$ in the simulation. As for the $99.5\%$ random forest prediction interval, no closed-formula is available, either.  But,  a large-sample approximation is well-known: 
\[
\left( \min\{0, |\hat{g}(X_{n+1})-\hat{q}_{99.5\%}|\},  \hat{g}(X_{n+1})+\hat{q}_{99.5\%} \right),
\]
where $\hat{g}(X_{n+1})$ is the prediction of $Y_{n+1}$, given  $X_{n+1}$, made by the fitted random forest,  and $\hat{q}_{99.5\%}$ is the $99.5\%$ empirical quantile of $|Y_1-\hat{g}(X_1)|, \ldots, |Y_n-\hat{g}(X_n)|$.  Coverage probabilities of the GLM and random forest prediction intervals are calculated in the same way as we do for the conformal prediction interval.  Table~\ref{table:ex1} summarizes the coverage probabilities and the mean length relative to the oracle for all three methods.

\begin{table}[!ht]
\begin{center}
\scalebox{0.9}{
\begin{tabular}{l|cccccc} 
\hline
Method     		&   Coverage Probability &  Mean Length Relative to the Oracle Length&\\
 \hline
Conformal     		& 99.50\%&      1.0750\\
GLM         			 & 99.75\% & 1.1053     \\
 Random Forest         & 98.10\% &      0.3065\\
\hline

\hline
\end{tabular}
}
\end{center}
\caption{\small {Coverage probabilities and mean length relative to oracle length for the $99.5\%$ prediction intervals in Example~1,  based on conformal prediction, gamma regression, and random forest.}}
\label{table:ex1}
\end{table}

Conformal prediction intervals and GLM prediction intervals both achieve the nominal coverage level, while random forest prediction intervals fall short of providing adequate coverage.  The mean length relative to the oracle length indicates the efficiency of each method.  We need to exercise caution when interpreting this quantity.  For example,  if each of the $N=2,000$ prediction intervals in the simulation were the same as the oracle prediction interval, then the mean length to the oracle length would be exactly $1$.  However, if each upper bound of the $N=2,000$ prediction intervals in the simulation were very close to the new response value $Y_{n+1}$ in that iteration,  then the prediction method under consideration could be excellent for point prediction,  but,  the mean length relative to the oracle length would be considerably less than 1 since most response values will be much less the $99.5\%$ percentile of $\gam(2.04, 2.5)$---the upper bound of the oracle prediction interval.  Therefore,  the low mean length relative to the oracle length of random forest prediction intervals does not mean random forest prediction intervals perform terribly,  it simply mean they are ``too aggressive'' in terms of the mean length: they tend to be too close to the oracle prediction interval,  with each of them being either slightly under-cover or slightly over-cover,  but, on average,  they under-cover and fail to achieve the desired coverage level.  Conformal prediction intervals and GLM prediction intervals are both efficient, though they have higher mean lengths relative to the oracle length than random forest prediction intervals.  There is no clear winner between conformal prediction intervals and GLM prediction intervals.  However, the gamma regression model is well-specified in this case.  In the next example, we will see that GLM prediction intervals will not perform as well as conformal prediction intervals when the model is misspecified.

\subsection*{Example 2}
We perform the same simulation as in Example~1, except that now we assume the true data-generating mechanism is
\begin{equation}
\label{eq:oracle2}
Y=X_1+\epsilon,
\end{equation}
where $X_1\sim \Pareto(3, 10)$,  $\varepsilon\sim \gam(0.04, 2.5)$, and $X_1$ and $\varepsilon$ are independent.  All parameters are chosen to be the same as in Example~1: $\alpha=0.5\%$, $N=2,000$, $n=200$, $B=500$,  and the number of trees in random forest is $500$.  In this case, the oracle $99.5\%$ prediction interval is unknown, but we can approximate it as $[0,\hat{b})$, where $\hat{b}$ is the $99.5\%$ empirical quantile.  Table~\ref{table:ex2} summarizes the simulation results.

\begin{table}[!ht]
\begin{center}
\scalebox{0.9}{
\begin{tabular}{l|cccccc} 
\hline
Method     		&   Coverage Probability &  Mean Length Relative to the Oracle Length&\\
 \hline
Conformal     		& 99.60\%&      1.1335\\
GLM         			 & 57.80\% &    0.0699  \\
 Random Forest          & 98.80\% &     0.3105 \\
\hline

\hline
\end{tabular}
}
\end{center}
\caption{\small {Coverage probabilities and mean length relative to oracle length for the $99.5\%$ prediction intervals in Example~2, based on conformal prediction, gamma regression, and random forest.}}
\label{table:ex2}
\end{table}

In this case, the exact distribution of $Y$ is unknown. Since $X_1$ has a heavy-tailed distribution and plays a dominant role in (\ref{eq:oracle2}),  the distribution of $Y$ is heavy-tailed too.  Therefore,  a gamma regression model with the identity link function is a misspecified model.   As a result,  GLM prediction intervals seriously under-cover.   Table~\ref{table:summaryex2} displays some summary statistics of the $2,000$ GLM prediction interval lengths.  Note that $\E[Y]>\E[X_1]=5$, which is more than $20\%$ larger than the $90\%$ empirical percentile of these GLM prediction interval lengths.  Therefore,  most of them are too short to contain the new response variable $Y_{n+1}$. This explains why the GLM mean length to the oracle length is so low.

\begin{table}[!ht]
\begin{center}
\scalebox{0.9}{
\begin{tabular}{l|ccccccc}
\hline
 Variable       & Minimum & 1st Quartile & Median & 3rd Quartile & $90\%$ percentile & Maximum & Mean\\
PI Length    & 2.4260 & 3.0370 & 3.2800 & 3.6868 & 3.9025 & 834.953 & 3.8678\\
\hline
\end{tabular}
}
\end{center}
\caption{\small  Some summary statistics of the $2,000$ GLM prediction interval lengths in Example~2.  }
\label{table:summaryex2}
\end{table}

The performance of random forest prediction intervals is similar to what we saw in Example~1. This is no surprise, because random forest is also a model-free approach and not subject to model misspecification.  As for efficiency, random forest prediction intervals are more conservative than GLM prediction intervals. However,  they are still ``too aggressive'', under-estimating the degree of uncertainty in the data.  Conformal prediction is the only method here that achieves the nominal coverage probability level.  It is also efficient with a mean length relative to the oracle length of $1.1335$.

\subsection*{Example 3}
Many insurance regression problems involve some categorical predictors. These categorical predictors are often coded as numerical scores or treated as dummy variables (e.g.,  Section~4.1,  Frees 2010).  Here we consider one such case.  Suppose the oracle data-generating mechanism is 
\begin{equation}
\label{eq:oracle3}
Y=X_1^2+3 X_2 +  2X_3 +\epsilon,
\end{equation}
where $\varepsilon$ is a random error term, $X_1\sim \Pareto(1.5, 4)$, $X_2\sim \Bern(1/3)$,  $X_3\sim \lnm(1, 0.5)$, and $X_1, X_2, X_3$, and $\varepsilon\sim \gam(2, 4)$ are independent.  We perform the same simulation with the same parameter values as in Example~1. But here the GLM model is a three-predictor (not one-predictor) gamma regression model with the identity link function.  Table~\ref{table:ex3} summarizes the results.

\begin{table}[!ht]
\begin{center}
\scalebox{0.9}{
\begin{tabular}{l|cccccc} 
\hline
Method     		&   Coverage Probability &  Mean Length Relative to the Oracle Length&\\
 \hline
Conformal     		& 99.65\%&      1.7379\\
GLM         			 & 70.20\% &    0.2780\\
 Random Forest          & 98.80\% &      0.5469\\
\hline

\hline
\end{tabular}
}
\end{center}
\caption{\small {Coverage probabilities and mean length relative to oracle length for the $99.5\%$ prediction intervals in Example~3, based on conformal prediction, gamma regression, and random forest.}}
\label{table:ex3}
\end{table}

Similar to what we observed in Example~2, conformal prediction is the only method that achieves the nominal coverage probability level.  However,  this time conformal prediction intervals are more conservative.  This is due to the fact that the data here has a higher degree of uncertainty than before.  Note that $\E[X_1]$, $\E[X_2]$, and $\E[\epsilon]$ are $8$, $3.08$, and $0.5$, respectively. The second term in (\ref{eq:oracle3}), being either $3$ or $0$, adds a nontrivial layer of uncertainty to the data.  Therefore,  the conservativeness of the conformal prediction intervals is the price we must pay for the finite-sample validity.  Indeed, all three methods behave more conservatively than in Example~2.  The performance of 
random forest prediction intervals is similar to what we saw in the previous two examples: they tend to be aggressive in length, but, on average,  they under-cover.  The gamma regression model with the identity link is still a misspecified model.  Model misspecification again wreaks havoc on prediction: the resulting GLM prediction intervals are too short to achieve the nominal coverage probability level.

\subsection{Personal injury insurance claims data}

In this real-data example, we consider the personal injury insurance claims data from Jong and Heller (2008).  This data set contains information about $22,036$ settled personal injury claims. These claims resulted from accidents that occurred from July 1989 and January 1999.  Where these claims occurred is unknown.  Claims settled with no payment are not included.
There are $11$ variables in total.  Table~\ref{table:predictors} lists the names, meanings, and values of all these variables.  \\

\begin{table}[!ht]
\begin{center}
\scalebox{0.9}{
\begin{tabular}{lll}
\hline
Variable     & Denotation & Range\\
\hline
total      & settled amount & $\$10–\$4, 490, 000$\\
inj1        &     injury 1 &  1 (no injury),  2, 3, 4, 5, 6 (fatal), 9 (not recorded)\\
inj2        &     injury 2 &  1 (no injury),  2, 3, 4, 5, 6 (fatal), 9 (not recorded)\\
inj3        &     injury 3 &  1 (no injury),  2, 3, 4, 5, 6 (fatal), 9 (not recorded)\\
inj4        &     injury 4 &  1 (no injury),  2, 3, 4, 5, 6 (fatal), 9 (not recorded)\\
inj5        &     injury 5 &  1 (no injury),  2, 3, 4, 5, 6 (fatal), 9 (not recorded)\\
legrep   &  legal representation &  0  (no) , 1  (yes) \\
accmonth    &    accident month              & 1 (July 1989), ..., 120  (June 1999)\\
repmonth    &    reporting month              & 1 (July 1989), ..., 120  (June 1999)\\
finmonth    &    finalization month              & 1 (July 1989), ..., 120  (June 1999)\\
op$\_$time    &     operational time      & 0.1--99.1\\
\hline
\end{tabular}
}
\end{center}
\caption{\small  Summary statistics of the personal injury insurance data. }
\label{table:predictors}
\end{table}

Except for the variable ``total'', all other variables are categorical and have been assigned numerical scores.  For variables ``inj1'',  ``inj2'',  ``inj3'', ``inj4'', and ``inj5'', there are lots of missing values. We assign $0$ to each of these missing values. Table~\ref{table:summary_real} displays summary statistics of all these variables. \\

\begin{table}[!ht]
\begin{center}
\scalebox{0.9}{
\begin{tabular}{l|ccccccc}
\hline
 Variable       & Minimum & 1st Quartile & Median & Mean & 3rd Quartile & Maximum \\
 \hline
total     &  10 & 6,279  &  13,845 & 38,367 &  35,123 & 4,485, 797 \\
inj1    &  1.00 &  1.00 &  1.00 & 1.83 &  2.00 & 9.00  \\
inj2    &  0.00 &   0.00 & 1.00&  0.67 & 1.00 &   9.00\\
inj3    &  0.00 &  0.00 &  0.00 & 0.42 & 1.00  &9.00 \\
inj4    &  0.00 &  0.00 &  0.00 & 0.24 & 1.00  &9.00 \\
inj5    &  0.00 &  0.00 &  0.00 & 0.16 & 1.00  &9.00 \\
legrep    &  0.00 & 0.00 & 1.00 & 0.64 & 1.00   & 1.00\\
accmonth   & 1.00 & 44.00  & 64.00 & 62.06& 82.00  & 115.00 \\
repmonth    &  15.00 & 54.00 & 69.00 & 71.33 & 85.00  & 116.00 \\
finmonth    &  49.00 & 72.00 & 91.00 & 88.37 & 106.00   & 117.00\\
op$\_$time  & 0.10  & 23.00  & 45.90  & 46.33  & 69.30  &  99.10\\
\hline
\end{tabular}
}
\end{center}
\caption{\small  Summary statistics of the personal injury insurance data.  }
\label{table:summary_real}
\end{table}

We apply the proposed method to this data set with `total' being the response variable and all other variables being predictors.  Table~\ref{table:ex4} shows the upper bound of the $100(1-\alpha)\%$ conformal prediction interval in Theorem~\ref{thm:nonconform} for three $\alpha$ levels: $ 0.90, 0.95$, and $0.995$.  Although we do not know the true distribution here,  we know from Section~3 that a conformal prediction interval is provably valid; hence  the three conformal prediction intervals in Table~\ref{table:summary_real} all achieve the nominal coverage probability level for their respective $\alpha$ value.  In particular,  if the insurer wants to comply with Solvency II, they can set their risk capital level to $544,093$ for this line of business.

\begin{table}[!ht]
\begin{center}
\scalebox{0.9}{
\begin{tabular}{l|ccc}
         $1-\alpha$ & prediction interval upper bound\\
 \hline
             0.90      & 85,333 \\
              0.95     & 148, 993\\
	   0.995     & 544, 093
\end{tabular}
}
\end{center}
\caption{\small The upper bounds of the $100(1-\alpha)\%$ conformal prediction interval of the form $(0, b)$ where $b>0$, based on personal injury  insurance data for various values of $\alpha$.  }
\label{table:ex4}
\end{table}


%
\section{Concluding remarks}
\label{sec:concluding}
Explanation and prediction are the two major tasks in data science and statistics. When it comes to actuarial science, prediction is usually the main goal.   As Example~2 demonstrates, if a prediction interval is created using a parametric model (e.g.,  a generalized linear model),  it can undercover when the proposed model fails to match the true data-generating mechanism.   This model misspecification issue is avoided when a prediction interval is based on a non-parametric method (e.g., Random Forests). However, such a prediction interval may not be finite-sample valid. For example,  Examples~1 to 3 show that Random Forests prediction intervals are not finite-sample valid.   In practice,  a finite-sample valid prediction interval is crucial for an insurer to understand the risk associated with their business and to set risk capital at an appropriate level.  In particular,  the insurer can apply it to comply with the risk capital requirements in Solvency II, which dictates that the insurer must be able to cover losses with a $99.5\%$.  This article proposed a new non-conformity measure to create a new prediction interval for the response variable in the regression problem.  This prediction interval has multiple merits: (i) it avoids both model misspecification risk and the selection effect, (ii) it has a closed-form formula and hence is easy to compute, and (iii) it guarantees finite-sample validity.  Our simulation study shows that this prediction interval achieves finite-sample validity at the confidence level $99.5\%$ across different scenarios.  In this paper, we have restricted the domain of the response variable $Y$ to $\mathbb{R}_+$ since the response variable is non-negative in most insurance applications.  However, it is easy to see that the results in this paper extend to the case where $Y\in\mathbb{R}$.   Finally,  the method proposed in this paper does not apply to the case of a categorical response variable.  To apply conformal prediction to predict a categorical response variable would require a different strategy than the one proposed in this article.

\section*{Acknowledgments}
I thank the Editor for his support and an anonymous reviewer for many helpful comments and suggestions.

\section*{References}

\begin{description}

\item{} Berk, R. Brown, L., Buja, A. Zhuang, K. and Zhao, L.~(2013). Valid post-selection inference, \emph{Annals of Statistics}, 41, 802--837.

\item{} Brazauskas, Y.~and Kleefeld, A.~(2016). Modeling severity and measuring tail risk of Norwegian fire claims. \emph{North American Actuarial Journal} 20(1), 1--16.

\item{} Chen,  X., Li, H.,  Lu,  Y.~and Zhou, R.~(2024).  Pricing catastrophe bonds---a probabilistic machine learning approach.  \url{https://papers.ssrn.com/sol3/papers.cfm?abstract_id=4809670}

\item{} Claeskens, G.~and Hjort, N.L.~(2008).  \emph{Model Selection and Model Averaging}.  Cambridge University Press: Cambridge, UK. 

\item{} de Jong,  P.~and Heller, G.Z.~(2008).  \emph{Generalized Linear Models for Insurance Data}. Cambridge University Press: Cambridge, UK. 

\item Fligner, M.~and Wolfe, D.A.~(1976). Some applications of sample analogues to the probability integral transformation and a coverage property. The American Statistician~30(2), 78--85.

\item{} Frees, E. W.~(2010). \emph{Regression Modeling with Actuarial and Financial Applications}, Cambridge: Cambridge University Press.

\item{} Frees, E.~W., Derrig, R.~A.~and Meyers, G.~(2014). \emph{Predictive Modeling Applications in Actuarial Science, Vol. I: Predictive Modeling Techniques}, Cambridge: Cambridge University Press.

\item{} Frey, J.~(2013). Data-driven nonparametric prediction intervals. \emph{Journal of Statistical Planning and Inference}~143, 1039--1048.

\item{} Gneiting, T.~and Katzfuss, M.~(2014).  Probabilistic forecasting. \emph{Annual Review of Statistics and Its Applications}~1:125--151,

\item{} Hong, L.~and Martin, R.~(2017). A flexible Bayesian non-parametric model for predicting future insurance claims.  \emph{North American Actuarial Journal}~21(2), 228--241.

\item{} Hong, L., Kuffner, T. ~and Martin, R. ~(2018a). On overfitting and post-selection uncertainty assessments.  \emph{Biometrika}~105(1), 221--224.

\item{} Hong, L., Kuffner, T. ~and Martin R. ~(2018b). On prediction of future insurance claims when the model is uncertain. \emph{Variance}~12(1), 90--99.

\item{} Hong, L. and Martin, R. (2020). Model misspecification, Bayesian versus credibility estimation, and Gibbs posteriors. \emph{Scandinavian Actuarial Journal}~2020(7), 634--649.

\item{} Hong, L.~and Martin, R.~(2021). Valid model-free prediction of future insurance claims. \emph{North American Actuarial Journal}~25(4), 473--483.

\item{} Hong, L.~(2023). Conformal prediction credibility intervals. \emph{North American Actuarial Journal}~27(4), 675--688.

\item Jeon, Y.~and Kim, J.~H.~T.~(2013). A gamma kernel density estimation for insurance loss data. \emph{Insurance: Mathematics and Economics}~53, 569--579.

\item{} Kuchibhotla, A.K. Kolassa, J.E.~and Kuffner, T.A.~(2022). Post-Selection Inference. \emph{Annual Review of Statistics and Its Application}~9,  505--527.

\item{} Lei, J., Robins, J.~and Wasserman, L.~(2013). Distribution-free prediction sets. \emph{Journal of American Statistical Association}~108(501), 278--287.

\item{} Lei, J.~and Wasserman, L.~(2014). Distribution-free prediction bands for non-parametric regression.  \emph{Journal of Royal Statistical Society}-Series~B~(76), 71--96.

\item{} Olive, D.J., Rathnayake, R.C.~and Haile, M.G.~(2022). Prediction intervals for GLMs, GAMs, and some survival regression models. \emph{Communications in Statistics---Theory and Methods}~51(22), 8012--8026.

\item{} Richardson, R.~and Hartman, B.~(2018). Bayesian non-parametric regression models for modeling and predicting healthcare claims. \emph{Insurance: Mathematics and Economics}~83, 1--8.

\item{} Schervish, M.J.~(1995). \emph{Theory of Statistics}. Springer: New York.

\item{} Shafer, G.~and Vovk, V.~(2008). A tutorial on conformal prediction. \emph{Journal of Machine Learning}~9, 371--421.

\item{} Shmueli, G.~(2010). To explain or to predict? \emph{Statistical Sciences}~25(3), 289--310. 

\item{} Solvency II~(2009). \url{http://eur-lex.europa.eu/LexUriServ/LexUriServ.do?uri=OJ:L:2009:335:0001:0155:en:PDF}. Accessed on December 1, 2024.

\item{} Trevor,  H., Tibshirani, R.~and  Hastie, T.~(2009).  \emph{Elements of Statistical Learning}, Second Edition. Springer: New York.

\item{} Tukey, J.W.~(1962). The future of data analysis. \emph{Annals of Mathematical Statistics}~33, 1--67. 

\item{}  van der Vaart, A.~W.~(1998). \emph{Asymptotic Statistics}, New York: Cambridge University Press.

\item{} Vovk, V., Gammerman, A., and Shafer, G.~(2005). \emph{Algorithmic Learning in a Random World}. New York: Springer.

\item{} Vovk, V., Shen, J., Manokhin, V., Xie, M.~(2019). Non-parametric predictive distributions based on conformal prediction. \emph{Machine Learning}~108: 445--474.

\item{} Vovk, V.~(2012). Conditionally validity of inductive conformal features. \emph{Journal of Machine Learning Research: Workshop and Conference Proceedings}~25:475--490.

\item{} Vovk, V.~(2024). Conformal predictive distribution: an approach to non-parametric fiducial prediction. \emph{Chapter 17, Handbook of Bayesian, Fiducial, and Frequentst Inference}, Edited by Berger, J., Meng, X., Reid, N., and Xie, M.. CRC Press: Boca Raton.

\item{} Wilks, W.~(1941). Determination of sample sizes for setting tolerance limits. \emph{Annals of Mathematical Statistics}~12, 91--96.

\end{description}

\end{document}